\documentclass[runningheads]{llncs}

\usepackage[T1]{fontenc}
\usepackage{graphicx}
\usepackage{amsmath}
\usepackage{amssymb}
\usepackage{booktabs}
\usepackage{multirow}
\usepackage{url}
\usepackage{color}

\begin{document}

\title{OpenSPM: An Environment-Transferable Robotic Key Spatial Pose Memory and Closed-Loop High-Frequency Flow-Matching Action Generation Model}

\titlerunning{OpenSPM}

\author{
Anonymous submission 
}
\author{
Iok Tong Lei$^{*}$\inst{1,2}\orcidID{0009-0007-9431-2740} \and
Qingchen Xie$^{*}$\inst{1,2}\orcidID{0009-0008-0464-2672} \and
Yifan Wang$^{*}$\inst{1,2}\orcidID{0009-0003-1138-8987} \and
Yap Ying Jie\inst{1,2}\orcidID{0009-0001-7471-5899} \and
Zhidong Deng$^{\dagger}$\inst{1,2}\orcidID{0000-0001-9970-1023}
}

\authorrunning{I. T. Lei et al.}

\institute{
Department of Computer Science and Technology, Tsinghua University, Beijing 100084, China
\and
Institute for Artificial Intelligence, Tsinghua University, Beijing 100084, China
}

\maketitle

\begin{center}
\small
$^{*}$ Equal contribution. \quad
$^{\dagger}$ Corresponding author.
\end{center}

\begin{abstract}
Open-environment tabletop robotic manipulation requires systems to possess semantic understanding, precise geometric pose estimation, and high-frequency action generation. While end-to-end vision-language-action (VLA) models excel at semantic generalization, they often lack explicit geometric constraints for fine-grained tasks and require costly training. To bridge the gap between high-level semantics and low-level physical execution, we propose OpenSPM, an open environment spatial persistent memory framework consisting of spatial pose memory and flow-matching action generation model. OpenSPM first leverages semantically conditioned 3D perception and Kalman filtering to track continuous 6D poses. It then extracts key spatial poses from human demonstrations, keeping them as transferable, object-centric spatial persistent memory entries. During inference, OpenSPM retrieves relevant memory entries in terms of natural language instructions, transfers the spatial poses to new scenes using SE(3) transformations, and generates high-frequency action chunks via a lightweight conditional flow-matching model. Combined with real-time proprioceptive state feedback and terminal residual correction, the system effectively suppresses trajectory error accumulation. Evaluated on ten LIBERO-GOAL tasks, OpenSPM achieves an 85.6\% success rate and an equivalent control frequency of 1033.3 Hz, while requiring minimal inference AI computing power. Extensive ablations illustrate that structured spatial persistent memory and closed-loop residual correction play a crucial role in reliable, high-frequency robotic manipulation.

\keywords{Vision-language-action model \and Key spatial pose memory \and Environment transferability \and 3D scene perception \and Flow matching}
\end{abstract}

\section{Introduction}

Language-conditioned tabletop manipulation requires robots to convert abstract task goals into continuous and physically feasible motor behaviors. Beyond recognizing objects, a robot must preserve fine-grained 6D spatial constraints during contact-rich phases such as grasping, lifting, placing, pushing, and rotating.

Recent vision-language-action (VLA) models have shown strong semantic generalization by learning end-to-end mappings from images and language to actions~\cite{rt1,rt2,openx,openvla,octo,pi0}. However, their deployment under few-shot, low-compute, and high-frequency-control settings remains challenging. Large-scale VLA policies usually require costly robot data and training, while their action outputs often lack explicit object-centric geometric constraints. This can lead to unstable execution in manipulation phases that demand precise pose alignment.

This paper adopts a decompositional view: high-level models provide semantic parsing and experience retrieval, while low-level models generate geometrically constrained local actions. We use key spatial poses as the intermediate representation. A key spatial pose marks a manipulation phase boundary, such as approach, grasp, lift, pre-place alignment, and release. When represented as the relative pose between the end effector and the target object frame, it can be transferred across scenes through SE(3) transformations while preserving the local interaction geometry.

We propose OpenSPM, an open environment spatial persistent memory framework that integrates semantic 3D perception, object-centric key spatial pose memory, and closed-loop flow-matching action generation. In the offline fine-tuning phase, OpenSPM extracts phase-level key poses from demonstrations and stores them as transferable relative poses. In the online inference phase, it retrieves relevant spatial persistent memory entries in terms of language instructions, transfers them to the current scene using estimated object poses, and generates short-horizon action chunks between adjacent key poses. Real-time state feedback and terminal residual correction are used to suppress cross-segment error accumulation.

The main contributions of this paper are summarized as follows:

\begin{itemize}
    \item \textbf{Semantically Conditioned 3D Perception:} We propose a novel perception pipeline unifying multi-view 2D semantics, 3D reconstruction, and Kalman state estimation. It provides continuous, scale-consistent 6D pose representations for reliable cross-scene transfer.
    
    \item \textbf{Key Spatial Pose Memory Bank:} We design a retrieval-based memory bank that compresses long demonstrations into phase-labeled, object-centric SE(3) relative poses, effectively mapping language instructions to executable geometric priors.
    
    \item \textbf{Closed-Loop Flow-Matching Generation:} We introduce a lightweight (240K parameters) conditional flow-matching model to generate high-frequency action chunks between key poses. By integrating real-time state feedback and terminal residual correction, it prevents cross-segment error accumulation.
    
    \item \textbf{Systematic Evaluation and Efficiency:} Extensive experiments on LIBERO-GOAL demonstrate that OpenSPM achieves a state-of-the-art 85.6\% success rate with an equivalent action frequency of 1033.3 Hz, while requiring very low inference AI computing power. Ablations thoroughly validate the necessity of spatial persistent memory and closed-loop correction.
\end{itemize}

\section{Related Work}

\subsection{VLAs}

VLAs unify visual perception, language understanding, and action generation in a single policy. Built on advances in visual and vision-language pretraining such as CLIP, ViT, DINOv2, and SigLIP~\cite{clip,vit,dinov2,siglip}, robotic models including RT-1, RT-2, Open X-Embodiment/RT-X, OpenVLA, Octo, and $\pi_0$ have demonstrated promising semantic transfer across tasks and embodiments~\cite{rt1,rt2,openx,openvla,octo,pi0}. However, these methods typically depend on large-scale robot datasets and high-capacity models. Their end-to-end action outputs also provide limited explicit geometric guarantees, which is problematic for physical contact-sensitive phases requiring accurate object-relative pose alignment.

\subsection{3D Perception and Spatial Manipulation Representations}

Robotic manipulation requires object pose, orientation, local physical contact regions, and constrained-motion directions rather than only category labels. Promptable segmentation models such as SAM and SAM 2~\cite{sam,sam2}, together with object-level 3D reconstruction methods such as SAM 3D~\cite{sam3d}, provide useful tools for open-vocabulary geometric perception. In parallel, multi-view geometry, object coordinate frames, and 3D keypoints have been widely adopted for grasping, pose alignment, and manipulation planning. OpenSPM uses these ideas to construct a unified SE(3) interface for memory transfer and closed-loop execution.

\subsection{Demonstration Memory and Keyframe-Based Learning}

Keyframe extraction reduces long demonstrations into compact manipulation structures and improves interpretability. Prior work such as Transporter Networks, robomimic, Perceiver-Actor, and R3M has explored spatial alignment, offline imitation learning, multi-task manipulation, and transferable visual representations~\cite{transporter,robomimic,perceiveractor,r3m}. Retrieval-based robotic systems further reuse successful experiences through memory entries or skill libraries, while LIBERO emphasizes knowledge transfer across manipulation tasks~\cite{libero}. Unlike trajectory replay, OpenSPM stores phase-labeled end-effector poses relative to object frames, making each memory entry semantically retrievable, geometrically transferable, and execution-verifiable.

\subsection{Diffusion and Flow-Matching Policies}

Diffusion models have been introduced into planning and visuomotor control through methods such as Diffuser and Diffusion Policy~\cite{ddpm,diffuser,diffusionpolicy}. They are effective for modeling multimodal action distributions but often require iterative denoising during inference. Flow matching directly learns a velocity field that transports noise to data, offering a simpler training objective and efficient sampling~\cite{flowmatching,rectifiedflow}. OpenSPM uses flow matching only for short-horizon action completion between adjacent transferred key poses, which greatly reduces model size and inference cost compared with end-to-end image-language-action generation.

\section{Method}

\subsection{Problem Definition and Overall Framework}

Given a natural-language instruction $l$, a multi-view image sequence $I_t=\{I_t^v\}_{v=1}^{V}$, robot proprioceptive state $s_t$, and gripper state $g_t$, the goal is to generate an action sequence $a_{1:T}$ within a finite number of steps such that the environment reaches the success state defined by task predicates.

This paper adopts an operational-space action representation:
\begin{equation}
a_t = (\Delta p_t,\Delta \omega_t,g_t) \in \mathbb{R}^{7},
\end{equation}
where $\Delta p_t \in \mathbb{R}^{3}$ denotes the translational increment of the end effector, $\Delta \omega_t \in \mathbb{R}^{3}$ denotes the orientation increment in axis-angle form, and $g_t$ denotes the gripper open/close command.

Object, i.e. end-effector, and camera poses are all represented as homogeneous transformations on SE(3):
\begin{equation}
T=
\begin{pmatrix}
R & p\\
0^\top & 1
\end{pmatrix},
\quad
R\in SO(3),
\quad
p\in\mathbb{R}^{3}.
\end{equation}

OpenSPM (Fig.~\ref{fig:framework}) comprises of offline memory construction and online retrieval-execution. Offline, it extracts object-level and end-effector 6D poses from demonstrations and stores phase-level relative poses. Online, it retrieves relevant memory entries, transfers key poses to the current scene, and generates local action chunks with closed-loop correction.

\begin{figure}
\centering

\includegraphics[scale=0.37]{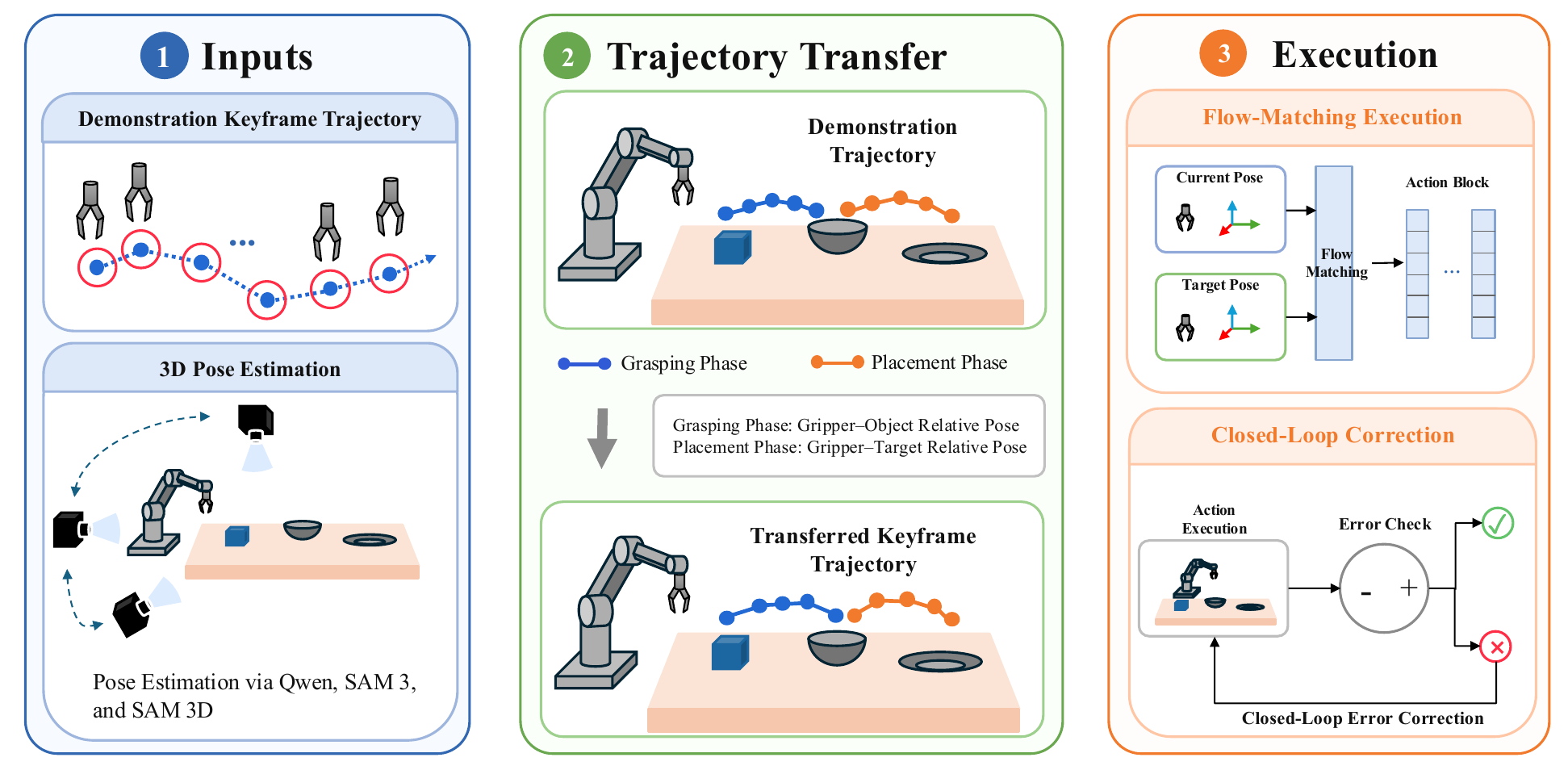}

\caption{Overall framework of OpenSPM. The system first estimates object-level 6D poses from multi-view images and proprioceptive states, then retrieves staged relative poses from the key spatial pose memory bank and transfers them to the test scene. Finally, a flow-matching model generates short-horizon action chunks, and physical contact residual correction completes execution.}
\label{fig:framework}
\end{figure}

\subsection{Semantically Conditioned 3D Perception and Kalman State Estimation}

In open tabletop scenes, target object categories are diverse, and it is difficult to prepare accurate CAD models for every object in real deployment. OpenSPM designs the perception module (Fig.~\ref{fig:3view}) as a cascaded structure of semantic segmentation, 3D reconstruction, and state-consistent estimation.
\begin{figure}
\centering

\includegraphics[scale=0.6]{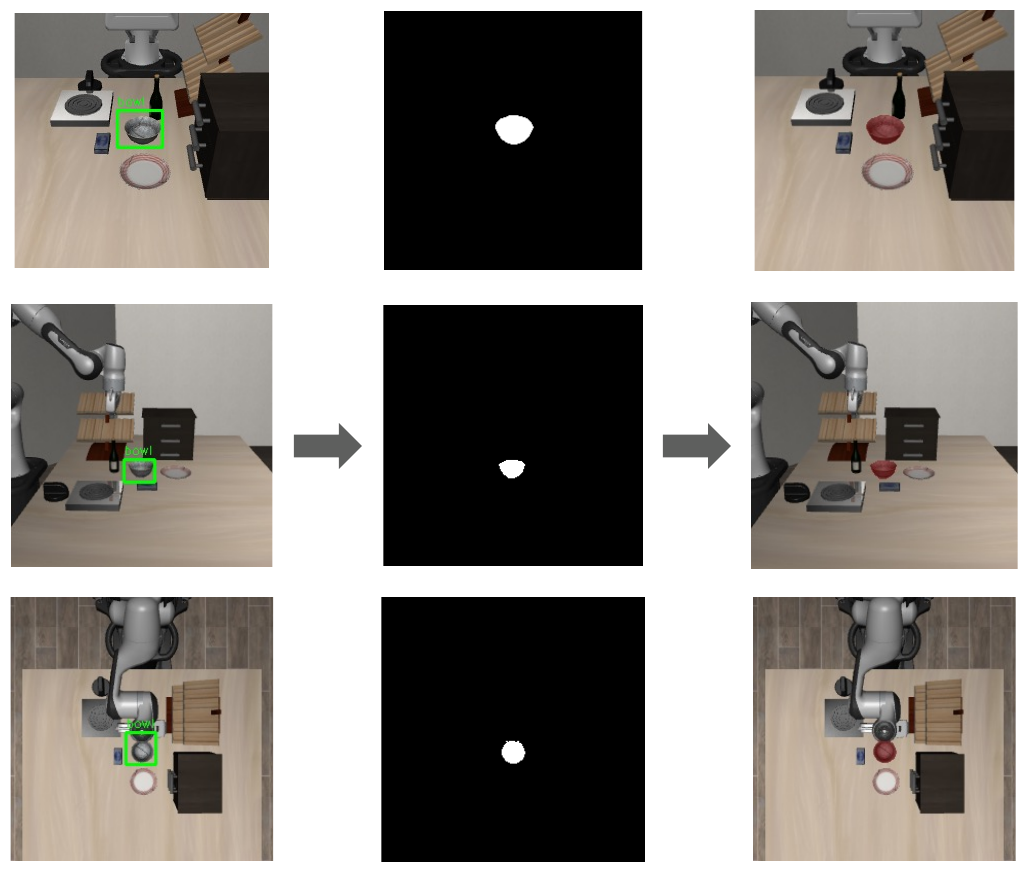}

\caption{Illustration of the semantic 3D perception module. Multi-view semantic masks provide object-level geometric constraints, while SAM 3D supplies local object shape and principal-axis orientation; together, they initialize and refine the state estimation.}
\label{fig:3view}
\end{figure}
Given image $I_t^v$ from view $v$ and text prompt $q$, the segmentation model outputs the target mask $M_t^v$, bounding box $B_t^v$, and confidence score $\rho_t^v$. For multi-view cameras, known intrinsic and extrinsic parameters are used to lift 2D semantic observations into point-set constraints in the world coordinate frame. For single-view subjective observation, object-level 3D reconstruction is used to obtain local shape and principal-axis directions.

To reduce the influence of visual observation jitter on key spatial pose extraction, this paper treats pose estimation as a noisy state-space problem and adopts the idea of Kalman filtering to predict and update continuous states~\cite{kalman}. Taking the target object as an example, the state vector contains the translation, the linear velocity, the local orientation error, and the angular velocity:
\begin{equation}
x_t =
\begin{pmatrix}
p_t^\top,
\dot{p}_t^\top,
\theta_t^\top,
\omega_t^\top
\end{pmatrix}^{\top}.
\end{equation}

The linearized motion model and observation model are written as:
\begin{equation}
x_t = F_t x_{t-1} + w_t,
\quad
z_t = H_t x_t + v_t,
\end{equation}
where
\begin{equation}
w_t \sim \mathcal{N}(0,Q_t),
\quad
v_t \sim \mathcal{N}(0,R_t).
\end{equation}

The observation covariance $R_t$ is jointly determined by segmentation confidence, mask area, multi-view consistency, and 3D reconstruction quality. The prediction and update steps are:
\begin{equation}
x_{t|t-1}=F_t x_{t-1|t-1},
\quad
P_{t|t-1}=F_tP_{t-1|t-1}F_t^\top+Q_t,
\end{equation}
\begin{equation}
K_t=P_{t|t-1}H_t^\top
\left(H_tP_{t|t-1}H_t^\top+R_t\right)^{-1},
\end{equation}
\begin{equation}
x_{t|t}=x_{t|t-1}+K_t\left(z_t-H_tx_{t|t-1}\right),
\end{equation}
\begin{equation}
P_{t|t}=(I-K_tH_t)P_{t|t-1}.
\end{equation}

The lightweight smoothing is then applied to the output translation and orientation:
\begin{equation}
p_t=\alpha p_{t|t}+(1-\alpha)p_{t-1},
\quad
q_t=\mathrm{slerp}(q_{t-1},q_{t|t},\alpha_q).
\end{equation}

The core purpose of this design is not to perform expensive 3D reconstruction at every frame, but to provide a scale-consistent and directionally clear geometric foundation during demonstration-trajectory construction and online episode initialization. During execution, pose continuity can be maintained mainly through low-dimensional filtering prediction.

\subsection{Construction of the Key Spatial Pose Memory Bank}

Demonstration trajectories usually contain hundreds of control steps. Directly storing and replaying them can lead to redundancy, temporal drift, and scene overfitting. This paper defines a key spatial pose as a discrete pose node that semantically marks phase boundaries and geometrically determines interaction success.

Candidate key spatial poses are jointly determined by three types of evidence: gripper-state transitions, end-effector velocity or orientation changes, and phase-semantic judgments made by multimodal models represented by Qwen3-VL, using three-view images and trajectory JSON~\cite{qwen3vl}.

Each memory entry can be represented as:
\begin{equation}
M_i=
\left\{
k_i,l_i,
\left\{
\Delta T_{i,j}^{o\rightarrow e},c_{i,j},\gamma_{i,j}
\right\}_{j=1}^{K_i},
r_i,\nu_i
\right\},
\end{equation}
where $k_i$ is the retrieval key, containing operation type, target object, placement object, and phase structure; $l_i$ is the original language description; $\Delta T_{i,j}^{o\rightarrow e}$ is the relative pose of the end effector with respect to the anchored object at the $j$-th key spatial pose; $c_{i,j}$ is the phase label; $\gamma_{i,j}$ is the confidence of the key spatial pose; and $r_i$ and $\nu_i$ denote the entry quality score and version information, respectively.

Using object-coordinate-frame relative poses rather than world-coordinate absolute poses is the core of memory transferability:
\begin{equation}
\Delta T_j^{o\rightarrow e}
=
{}^{w}T_o^{-1}
{}^{w}T_{e,j}.
\end{equation}

During online retrieval, structured task elements $\eta_l$ are parsed from instruction $l$, and the best-matching entry is selected from the memory bank $D$:
\begin{equation}
M^{*}=
\arg\max_{M_i\in D}
\lambda_s S_{\mathrm{sem}}(\eta_l,k_i)
+
\lambda_g S_{\mathrm{geo}}(o_t,M_i)
+
\lambda_r r_i.
\end{equation}

Here, $S_{\mathrm{sem}}$ measures the semantic-key matching degree, $S_{\mathrm{geo}}$ measures the consistency between the current scene geometry and the usability of the entry, and $r_i$ weights historical success rate and entry quality. This objective avoids selecting demonstration experience that matches only textually but is unsuitable for the current object layout.

\subsection{Relative Pose Transfer}

Let the target object pose in the demonstration scene be ${}^{w}T_o$, and the end-effector pose at the key moment be ${}^{w}T_{e,j}$. The memory bank stores the relative pose defined above. In the test scene, the perception module estimates the new target object pose ${}^{w}T'_o$. To preserve the local interaction geometry between the end effector and the target, the transferred world-coordinate key spatial pose is:
\begin{equation}
{}^{w}T'_{e,j}
=
{}^{w}T'_o
\Delta T_j^{o\rightarrow e}.
\end{equation}

For grasp-and-place tasks, the grasping phase usually uses the grasped object as the anchor, while the placement phase uses the placement target as the anchor. For constrained-motion objects such as drawers and knobs, the anchor also carries the joint-axis direction or constrained-motion direction.

After transfer, the system performs reachability, pose-jump, and minimum-distance checks on the key spatial pose sequence to avoid infeasible discontinuities among discrete nodes in the new scene.

\begin{theorem}[Consistency of Relative Pose Transfer]
If the target object poses in the demonstration scene and test scene satisfy
\begin{equation}
{}^{w}T'_o = G{}^{w}T_o,
\end{equation}
where $G\in SE(3)$ is an arbitrary rigid transformation, and the end-effector key spatial pose is transferred according to the relative pose transfer rule, then the local pose of the end effector relative to the target object remains unchanged:
\begin{equation}
({}^{w}T'_o)^{-1}
{}^{w}T'_{e,j}
=
({}^{w}T_o)^{-1}
{}^{w}T_{e,j}.
\end{equation}
\end{theorem}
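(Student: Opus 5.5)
The plan is a direct algebraic verification in the group $SE(3)$, using only the definitions of the stored relative pose and of the transfer rule given above. The facts needed are that $SE(3)$ is a group under matrix multiplication, so every homogeneous transform $T=\begin{pmatrix}R&p\\0^\top&1\end{pmatrix}$ has inverse $\begin{pmatrix}R^\top&-R^\top p\\0^\top&1\end{pmatrix}\in SE(3)$, and that multiplication is associative.

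First, I would substitute the transfer rule ${}^{w}T'_{e,j}={}^{w}T'_o\,\Delta T_j^{o\rightarrow e}$ into the left-hand side of the claimed identity. This gives
\begin{equation}
({}^{w}T'_o)^{-1}\,{}^{w}T'_{e,j}=({}^{w}T'_o)^{-1}\,{}^{w}T'_o\,\Delta T_j^{o\rightarrow e}=\Delta T_j^{o\rightarrow e}.
\end{equation}
Here associativity and the existence of $({}^{w}T'_o)^{-1}$ are used. Then I would invoke the definition $\Delta T_j^{o\rightarrow e}=({}^{w}T_o)^{-1}\,{}^{w}T_{e,j}$ to reach the right-hand side.

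For completeness, I would also remark on the role of the hypothesis ${}^{w}T'_o=G\,{}^{w}T_o$. Under it, the transferred pose equals ${}^{w}T'_{e,j}=G\,{}^{w}T_{e,j}$, so the whole end-effector key pose is moved by the same rigid motion as the object. One can then redo the computation as $({}^{w}T_o)^{-1}G^{-1}G\,{}^{w}T_{e,j}$. This makes explicit the interpretation that the transfer is equivariant under left multiplication by $SE(3)$.

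There is no real obstacle, since the identity holds for any invertible ${}^{w}T'_o$. The only point deserving care is that the inverses are taken within $SE(3)$. That is guaranteed because $R\in SO(3)$ makes every such matrix invertible with an inverse of the same form. The proof should also keep the order of multiplication, because $SE(3)$ is non-commutative: the object transform must be applied on the left for the cancellation to occur.
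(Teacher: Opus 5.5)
Your proposal is correct and takes the same route as the paper, which simply cites associativity of $SE(3)$ multiplication: substituting the transfer rule and cancelling $({}^{w}T'_o)^{-1}\,{}^{w}T'_o$ recovers the stored relative pose $\Delta T_j^{o\rightarrow e}$. You also correctly observe that the identity holds for any invertible ${}^{w}T'_o$, so the hypothesis ${}^{w}T'_o = G\,{}^{w}T_o$ is not actually needed for the equality; it only supports the equivariance interpretation ${}^{w}T'_{e,j}=G\,{}^{w}T_{e,j}$.
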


\begin{proof}
The proof follows directly from the associativity of SE(3) group multiplication. This property shows that as long as online perception can estimate the new pose of the target object, the gripper approach direction, grasping pose, and placement-relative position stored in memory can be consistently transferred to the new scene.
\end{proof}

\subsection{Conditional Flow-Matching Action Generation}

The transferred key spatial pose sequence still consists of discrete control points and cannot directly drive a continuous robot interface. OpenSPM uses a conditional flow-matching model to generate action chunks between adjacent key spatial poses.

For each action chunk, the model condition consists of the current actual gripper pose ${}^{w}T_{e,t}$, the next target key spatial pose ${}^{w}T_{e,j}$, their relative pose error, and the keyframe segment progress $\tau$:
\begin{equation}
c_t=
\phi
\left(
{}^{w}T_{e,t},
{}^{w}T_{e,j},
{}^{w}T_{e,t}^{-1}{}^{w}T_{e,j},
\tau
\right).
\end{equation}

The model outputs an action chunk of length $H=5$:
\begin{equation}
a_{t:t+H-1}.
\end{equation}

During training, given a real action chunk $a_{\mathrm{data}}\sim p_{\mathrm{data}}$ and Gaussian noise $a_0\sim\mathcal{N}(0,I)$, a linear interpolation path is constructed:
\begin{equation}
a_{\xi}=(1-\xi)a_0+\xi a_{\mathrm{data}},
\quad
\xi\sim U(0,1).
\end{equation}

The corresponding target velocity is:
\begin{equation}
u^{*}(a_{\xi}|a_0,a_1)=a_1-a_0.
\end{equation}

The conditional velocity field $v_{\theta}$ is trained by mean squared error:
\begin{equation}
\mathcal{L}_{\mathrm{FM}}(\theta)
=
\mathbb{E}_{\xi,a_0,a_1,c}
\left[
\left\|
v_{\theta}(a_{\xi},\xi,c)
-
(a_{\mathrm{data}}-a_0)
\right\|_{2}^{2}
\right].
\end{equation}

Training samples are constructed from adjacent key spatial pose segments. First, the demonstration trajectory is divided into local fragments using offline-extracted key spatial poses as boundaries, and fixed-length action chunks with $H=5$ are generated. Each sample contains the starting end-effector pose $s_0$, the target key spatial pose $s_1$, the relative pose error $\delta$, the normalized intra-segment progress $\tau$, and the real action chunk $a_{\mathrm{data}}$.

During inference, starting from $a_0\sim\mathcal{N}(0,I)$, the system integrates along the learned velocity field through ODE integration and takes the terminal state $a_1$ as the generated action chunk:
\begin{equation}
\frac{da_{\xi}}{d\xi}
=
v_{\theta}(a_{\xi},\xi,c),
\quad
\hat{a}_{t:t+H-1}=a_1.
\end{equation}

In implementation, the conditional encoder first encodes the current pose, target pose, relative pose, and progress into a latent vector. The shared step-wise decoder reuses this conditional latent vector at each ODE step, and combines the flow time, action chunk noise state, and step index to predict the velocity at each action step. This architecture avoids repeated encoding of global conditions. The model has approximately 240K parameters, making it suitable for high-frequency invocation during robot execution.

\subsection{Closed-Loop Execution and Terminal Residual Correction}

If the actions between adjacent key spatial poses are generated once and executed sequentially in an open-loop manner, centimeter-level action errors can accumulate across segments, eventually resulting in grasping misalignment or placement deviation. OpenSPM introduces closed-loop control at two levels.

First, inter-segment closed-loop execution uses the real-time gripper pose at the beginning of each action chunk as the condition ${}^{w}T_{e,t}$, rather than using the ideal starting point in the planned sequence. Second, terminal residual correction compares the current gripper position with the target key spatial pose position near critical grasping and placement nodes:
\begin{equation}
r_t=p_{e,j}-p_{e,t},
\quad
\|r_t\|_2>\epsilon_r.
\end{equation}

When the residual exceeds the threshold, the system reconstructs the condition using the current real state and generates a corrective action chunk until the error converges or the maximum number of iterations is reached. Since a single flow-matching inference only takes a few milliseconds, the additional time overhead caused by correction is small, but it is crucial for the robustness of physical contact establishment and release phases.

\subsection{Complexity and Error Propagation Analysis}

The online complexity of OpenSPM can be decomposed into perception initialization, memory retrieval, key spatial pose transfer, and action generation. The complexity of perception initialization is mainly determined by semantic segmentation, 3D reconstruction, and multi-view fusion, denoted as $C_{\mathrm{perc}}$. This cost mainly occurs during demonstration writing and online episode initialization, and is not repeated at every control step.

If memory retrieval uses linear scanning, its complexity is $O(Dd)$, where $D$ is the number of memory entries and $d$ is the dimensionality of semantic keys and geometric features. When the memory bank expands to large-scale task collections, vector indexing can first be used for approximate semantic retrieval, followed by geometric feasibility reranking on the candidate set.

Key spatial pose transfer only involves a small number of $4\times 4$ matrix multiplications, with complexity $O(K)$, where $K$ is the number of key spatial poses. For each action chunk, action generation performs $N_{\mathrm{ode}}$ velocity-field forward passes, with approximate complexity:
\begin{equation}
C_{\mathrm{act}}=
O(N_{\mathrm{ode}}H P_{\theta}),
\end{equation}
where $H$ is the action chunk length and $P_{\theta}$ is the effective parameter scale of the lightweight MLP. Since the model used in this paper has only about 240K parameters, the main time bottleneck during online execution lies not in the action model, but in environment stepping, low-level servoing, and sensor synchronization.

For error propagation, let the error of the gripper relative to the target key spatial pose at the beginning of the $m$-th action chunk be $e_m$, the action-model prediction error be $\delta_m$, and the low-level execution error be $\eta_m$. If open-loop concatenation is used, the propagation can be written as:
\begin{equation}
e_{m+1}=A_m e_m+\delta_m+\eta_m,
\end{equation}
where $A_m$ describes the propagation of the previous segment deviation to the next segment starting point.

OpenSPM reconstructs the condition using the real gripper reading at the beginning of each action chunk, which is equivalent to replacing the recursively planned error with an observation-corrected error:
\begin{equation}
e_{m+1}=B_m e_m+\delta_m+\eta_m,
\end{equation}
where $\|B_m\|$ is determined by local target error and state-observation accuracy. Terminal residual correction further iteratively contracts $e_m$ near critical contact nodes, giving grasping and release actions an opportunity to realign before entering irreversible contact.

\section{Experiments}

\subsection{Experimental Setup}

The experiments are conducted on ten language-conditioned tabletop manipulation tasks from LIBERO-GOAL~\cite{libero}. Each task uses 50 initial states, resulting in a total of 500 evaluation episodes. The main evaluation metrics are task success rate, single action chunk generation latency, equivalent action output frequency, and average execution steps.

If a single inference outputs an action chunk of length $H$ and the average inference latency is $t$ ms, the equivalent action output frequency is defined as:
\begin{equation}
f=
\frac{1000H}{t}\ \mathrm{Hz}.
\end{equation}

In addition to system-level evaluation, this paper reports three types of module-level experiments: The Kalman pose prediction evaluation, the key spatial pose memory ablation, and the closed-loop execution with terminal residual correction ablation.

\subsection{Evaluation Metrics and Statistical Protocol}

For physical failure and collision rates among failed episodes, this paper only counts physical-layer failures caused by kinematic infeasibility, penetration collision, or amplified spatial deviation within failed samples. Therefore, this metric is not equivalent to the total collision rate, but is used to characterize the proportion of destructive interactions caused by missing spatial priors or insufficient local calibration among failure modes.

\subsection{Overall Comparison with Representative Baselines}

Table~\ref{tab:main_results} lists the main results of our OpenSPM and representative baselines on LIBERO-GOAL. The baselines cover diffusion policies, VLA models, world-action models, and general robot policies. OpenSPM achieves an 85.6\% success rate with only 0.24M parameters, outperforming the Diffusion Policy~\cite{diffusionpolicy}, TraceVLA~\cite{heng2025tracevla}, SpatialVLA~\cite{qu2025spatialvla}, OpenVLA~\cite{openvla}, WorldVLA~\cite{cen2025worldvla}, and Octo-Base~\cite{octo}. In terms of efficiency, OpenSPM achieves a 4.8 ms inference latency and an equivalent action frequency of 1033.3 Hz with an action chunk length of $H=5$, substantially higher than other baselines.

\begin{table}
\caption{Performance comparison between OpenSPM and representative baselines on LIBERO-GOAL.}
\label{tab:main_results}
\centering
\begin{tabular}{llllr}
\toprule
Method & Paradigm & Parameters & Block Length & SR / Frequency \\
\midrule
Diffusion Policy~\cite{diffusionpolicy}& Diffusion model & $\sim$260M & 8 & 68.3\% / 14.2 Hz \\
TraceVLA~\cite{heng2025tracevla} & VLA & 7B & 1 & 75.1\% / 2.2 Hz \\
SpatialVLA~\cite{qu2025spatialvla} & VLA & 4B & 4 & 78.6\% / 6.7 Hz \\
OpenVLA~\cite{openvla} & VLA & 7B & 1 & 79.2\% / 4.9 Hz \\
WorldVLA~\cite{cen2025worldvla} & WAM & 7B & 25 & 83.4\% / 2.4 Hz \\
Octo-Base~\cite{octo} & Diffusion model & $\sim$200M & 4 & 84.6\% / 54.7 Hz \\
\midrule
OpenSPM (Ours)& Flow matching & 0.24M & 5 & \textbf{85.6\% / 1033.3 Hz} \\
\bottomrule
\end{tabular}
\end{table}

This advantage comes from explicit problem decomposition: 3D perception and key spatial pose memory provide spatial subgoals, while flow matching only needs to solve low-dimensional local action completion.

\subsection{Pose Consistency Evaluation of Kalman Prediction}

To verify whether the state-space estimation module can provide continuous and reliable geometric states with low computational overhead, this paper uses frame-by-frame SAM 3 + SAM 3D reconstruction results as offline visual references and compares the poses obtained by Kalman recursion with the reference poses in a unified base coordinate frame. Translation error is measured by Euclidean distance, while orientation error is measured by rotation distance and converted to degrees.

\begin{table}
\caption{Ten-task error statistics of the Kalman pose prediction relative to frame-by-frame SAM 3 + SAM 3D visual references.}
\label{tab:kalman}
\centering
\begin{tabular}{ccc}
\toprule
Task ID & Position MAE (mm) & Orientation MAE ($^\circ$) \\
\midrule
0 & 4.3 & 2.0 \\
1 & 4.1 & 0.9 \\
2 & 5.0 & 1.2 \\
3 & 14.7 & 6.4 \\
4 & 5.5 & 2.5 \\
5 & 5.7 & 1.6 \\
6 & 4.6 & 1.1 \\
7 & 5.1 & 1.3 \\
8 & 4.0 & 1.0 \\
9 & 5.0 & 1.8 \\
\midrule
Average & 5.8 & 2.12 \\
\bottomrule
\end{tabular}
\end{table}

As shown in Table~\ref{tab:kalman}, the average translation MAE over the ten tasks is 5.8 mm, with a maximum of 14.7 mm; the average orientation MAE is 2.12$^\circ$, with a maximum of 6.4$^\circ$. This result indicates that the low-dimensional prediction-update mechanism can stably approximate high-precision visual references under noisy demonstrations and sparse visual observations.

\subsection{Ablation of the Key Spatial Pose Memory Bank}

To evaluate the necessity of structured key spatial pose memory, we perform a memory-bank ablation in Table~\ref{tab:memory_ablation}. The offline extracted key spatial poses and transition priors are removed. Instead, three-view images, target and gripper OBB parameters, and point-cloud candidate points are directly input into a multimodal model, which performs zero-shot inference of end-effector 6D key spatial poses. Except for the source of high-level key spatial poses, the perception front-end, low-level control interface, and evaluation protocol remain unchanged.

\begin{table}
\caption{Ablation of the key spatial pose memory bank. This table reports only average results to isolate the contribution of the structured memory module.}
\label{tab:memory_ablation}
\centering
\begin{tabular}{p{0.42\textwidth}cp{0.35\textwidth}}
\toprule
Method Setting & SR (\%) & Description \\
\midrule
Zero-shot multimodal key spatial pose inference & 23.8 & Demonstration key spatial poses and transition priors are removed. \\
OpenSPM full method & \textbf{85.6} & Key spatial pose memory, pose transfer, and flow-matching execution are retained. \\
\bottomrule
\end{tabular}
\end{table}

The results show that key spatial pose memory is not a simple experience cache, but a physically executable prior. It explicitly encodes the implicit knowledge of physical contact points, approach directions, release positions, and phase boundaries from successful demonstrations into the object coordinate frame, reducing the risk that a multimodal model must guess interaction points from scratch in continuous 3D space.

\subsection{Ablation of Flow-Matching Execution and Terminal Residual Correction}

Table~\ref{tab:closed_loop_ablation} compares three settings: open-loop flow matching, inter-segment closed-loop flow matching, and inter-segment closed-loop flow matching with terminal residual correction. Here, closed-loop means reading the real-time proprioceptive state at the beginning of each action chunk and reconstructing the flow-matching condition, without introducing an additional image-feedback controller.

\begin{table}
\caption{Ablation of flow-matching execution strategy and terminal residual correction (TRC).}
\label{tab:closed_loop_ablation}
\centering
\begin{tabular}{lccc}
\toprule
Experimental Group & SR (\%) & $\Delta$SR & Average Steps \\
\midrule
Open-loop flow matching & 3.0 & -82.6 & 173.2 \\
Inter-segment closed-loop flow matching & 29.8 & -55.8 & 166.6 \\
Inter-segment closed-loop flow matching + TRC & \textbf{85.6} & 0 & 278.5 \\
\bottomrule
\end{tabular}
\end{table}

Open-loop flow matching achieves only a 3.0\% success rate, showing that even if the flow-matching model reaches centimeter-level errors on the validation set, multi-segment action concatenation still causes errors to accumulate rapidly. Introducing closed-loop state feedback improves the success rate to 29.8\%. Further adding terminal residual correction raises the success rate to 85.6\%.

It should be noted that terminal residual correction increases the average number of execution steps from about 167 to 278.5, indicating that the system requires more interaction steps near critical grasping and placement nodes to complete alignment. This overhead mainly appears as additional environment execution steps and low-level control time, rather than as a computational bottleneck of the action model itself.

\subsection{Training Convergence, Action Error, and Failure Modes}

The training curve of the flow-matching model usually converges within about two hundred epochs, with training and validation losses remaining at similar magnitudes and no obvious validation rebound. This phenomenon is consistent with the model capacity and input-condition design: the model does not directly process image tokens or language tokens, but only processes low-dimensional variables such as current pose, target pose, relative error, and phase progress. Therefore, its effective hypothesis space is much smaller than that of end-to-end VLA policies.

Validation-set action error cannot be directly equated with task success rate. Even if single-step position error is at the centimeter level, critical phases such as grasping, insertion, placement, and drawer contact often allow only centimeter-level or even smaller spatial margins. The ablation results further indicate that local trajectory fitting is a necessary but not sufficient condition for task completion.

Failure cases can be summarized into three categories. First, occasional upstream 2D localization or segmentation errors are transmitted to 3D pose estimation, causing the anchor point for relative pose transfer to deviate from the real target. This error is more pronounced for visually similar or heavily occluded round objects. Second, slender objects or objects with narrow support regions have small operation tolerances. Even after terminal correction, centimeter-level errors generated by flow matching may still be insufficient to compensate for unstable grasping or pose deviations. Third, some pushing and drawer-task failures come from environmental geometric constraints rather than action generation itself. For example, when the target path is blocked by scene objects or the drawer boundary conflicts with the gripper pose, residual correction cannot overcome the obstacle through repeated fine adjustment.

\section{Discussion}

OpenSPM shows that explicit geometry and memory are important complements to end-to-end VLA policies. Pure 2D perception provides semantics but not the SE(3) variables required for approach direction, placement-relative orientation, or constrained motion. By converting observations into object-level poses, OpenSPM gives memory retrieval, pose transfer, and action generation a common geometric interface.

The ablations further show that key spatial pose memory and closed-loop correction are both necessary. Zero-shot multimodal inference cannot reliably predict physically executable contact poses, because manipulation key points encode reachability, collision safety, contact order, and phase constraints. Similarly, lightweight flow matching is fast but not sufficient under open-loop concatenation. Real-time state feedback and terminal residual correction prevent local errors from accumulating into task failure.

The framework still depends on the quality of initial segmentation and 3D reconstruction. Occlusion, visually similar objects, narrow support regions, and unmodeled contact forces may degrade pose transfer or grasp stability. Future work can enrich memory entries with gripper width, contact events, local surface normals, obstacle distances, and force-related constraints.

\section{Conclusion}

In this paper, we propose OpenSPM, an open environment spatial persistent memory framework consisting of spatial pose memory and flow-matching action generation model for few-shot robotic manipulation. The method decomposes open-language manipulation into four stages: semantic 3D perception, structured key spatial pose memory, relative pose transfer, and local action generation. It uses 3D perception and state estimation to establish stable 6D pose representations, stores transferable local interaction relationships between the gripper and the target through key spatial pose memory, and uses a lightweight conditional flow-matching model to complete continuous actions between adjacent key spatial poses. Experimental results show that explicit 3D geometry and key spatial pose memory can effectively reduce the difficulty of long-horizon learning in few-shot manipulation, while state feedback and terminal residual correction can suppress error accumulation caused by action chunk concatenation, thereby improving execution stability in critical physical contact phases. Overall, OpenSPM demonstrates that, under few-shot, real-time-control, and fine-grained manipulation scenarios, interpretable geometric memory and lightweight generative action models can serve as an important complement to end-to-end VLA policies, providing a feasible path toward low-cost, high-frequency, and environment transferable robotic manipulation.


\begin{thebibliography}{25}

\bibitem{cen2025worldvla}
J. Cen, C. Yu, H. Yuan, Y. Jiang, S. Huang, J. Guo, X. Li, Y. Song,
H. Luo, F. Wang, D. Zhao, and H. Chen,
``WorldVLA: Towards Autoregressive Action World Model,''
\emph{arXiv preprint arXiv:2506.21539},
2025.


\bibitem{qu2025spatialvla}
D. Qu, H. Song, Q. Chen, Y. Yao, X. Ye, Y. Ding, Z. Wang, J. Gu,
B. Zhao, D. Wang, and X. Li,
``SpatialVLA: Exploring Spatial Representations for Visual-Language-Action Model,''
\emph{arXiv preprint arXiv:2501.15830},
2025.


\bibitem{heng2025tracevla}
R. Heng, Y. Liang, S. Huang, and others,
``TraceVLA: Visual trace prompting enhances spatial-temporal awareness for generalist robotic policies,''
in \emph{Proceedings of the International Conference on Learning Representations},
pp. 54277--54296, 2025.

\bibitem{rt1}
Brohan, A., et al.: RT-1: Robotics Transformer for Real-World Control at Scale. In: Proceedings of Robotics: Science and Systems XIX, Daegu, Republic of Korea (2023)

\bibitem{rt2}
Zitkovich, B., et al.: RT-2: Vision-Language-Action Models Transfer Web Knowledge to Robotic Control. In: Proceedings of the 7th Conference on Robot Learning, PMLR, vol. 229, pp. 2165--2183 (2023)

\bibitem{openx}
Open X-Embodiment Collaboration, et al.: Open X-Embodiment: Robotic Learning Datasets and RT-X Models. In: IEEE International Conference on Robotics and Automation, pp. 6892--6903 (2024)

\bibitem{openvla}
Kim, M.J., et al.: OpenVLA: An Open-Source Vision-Language-Action Model. In: Proceedings of the 8th Conference on Robot Learning, PMLR, vol. 270, pp. 2679--2713 (2025)

\bibitem{octo}
Octo Model Team, et al.: Octo: An Open-Source Generalist Robot Policy. In: Robotics: Science and Systems XX, Delft, The Netherlands (2024)

\bibitem{pi0}
Black, K., et al.: $\pi_0$: A Vision-Language-Action Flow Model for General Robot Control. In: Robotics: Science and Systems XXI, Los Angeles, CA, USA (2025)

\bibitem{clip}
Radford, A., et al.: Learning Transferable Visual Models From Natural Language Supervision. In: International Conference on Machine Learning, PMLR, vol. 139, pp. 8748--8763 (2021)

\bibitem{vit}
Dosovitskiy, A., et al.: An Image Is Worth 16x16 Words: Transformers for Image Recognition at Scale. In: International Conference on Learning Representations (2021)

\bibitem{dinov2}
Oquab, M., et al.: DINOv2: Learning Robust Visual Features Without Supervision. Transactions on Machine Learning Research (2024)

\bibitem{siglip}
Zhai, X., Mustafa, B., Kolesnikov, A., Beyer, L.: Sigmoid Loss for Language Image Pre-Training. In: IEEE/CVF International Conference on Computer Vision, pp. 11975--11986 (2023)

\bibitem{sam}
Kirillov, A., et al.: Segment Anything. In: IEEE/CVF International Conference on Computer Vision, pp. 4015--4026 (2023)

\bibitem{sam2}
Ravi, N., et al.: SAM 2: Segment Anything in Images and Videos. In: International Conference on Learning Representations (2025)

\bibitem{sam3d}
SAM 3D Team, et al.: SAM 3D: 3Dfy Anything in Images. arXiv preprint arXiv:2511.16624 (2025)

\bibitem{transporter}
Zeng, A., et al.: Transporter Networks: Rearranging the Visual World for Robotic Manipulation. In: Conference on Robot Learning, PMLR, vol. 155, pp. 726--747 (2021)

\bibitem{robomimic}
Mandlekar, A., et al.: What Matters in Learning From Offline Human Demonstrations for Robot Manipulation. In: Conference on Robot Learning, PMLR, vol. 164, pp. 1678--1690 (2022)

\bibitem{perceiveractor}
Shridhar, M., Manuelli, L., Fox, D.: Perceiver-Actor: A Multi-Task Transformer for Robotic Manipulation. In: Conference on Robot Learning, PMLR, vol. 205, pp. 785--799 (2023)

\bibitem{r3m}
Nair, S., Rajeswaran, A., Kumar, V., Finn, C., Gupta, A.: R3M: A Universal Visual Representation for Robot Manipulation. In: Conference on Robot Learning, PMLR, vol. 205, pp. 892--909 (2023)

\bibitem{libero}
Liu, B., et al.: LIBERO: Benchmarking Knowledge Transfer for Lifelong Robot Learning. In: Advances in Neural Information Processing Systems 36 (2023)

\bibitem{ddpm}
Ho, J., Jain, A., Abbeel, P.: Denoising Diffusion Probabilistic Models. In: Advances in Neural Information Processing Systems 33, pp. 6840--6851 (2020)

\bibitem{diffuser}
Janner, M., Du, Y., Tenenbaum, J.B., Levine, S.: Planning With Diffusion for Flexible Behavior Synthesis. In: International Conference on Machine Learning, PMLR, vol. 162, pp. 9902--9915 (2022)

\bibitem{diffusionpolicy}
Chi, C., et al.: Diffusion Policy: Visuomotor Policy Learning via Action Diffusion. In: Robotics: Science and Systems XIX, Daegu, Republic of Korea (2023)

\bibitem{flowmatching}
Lipman, Y., Chen, R.T.Q., Ben-Hamu, H., Nickel, M., Le, M.: Flow Matching for Generative Modeling. In: International Conference on Learning Representations (2023)

\bibitem{rectifiedflow}
Liu, X., Gong, C., Liu, Q.: Flow Straight and Fast: Learning to Generate and Transfer Data With Rectified Flow. In: International Conference on Learning Representations (2023)

\bibitem{kalman}
Kalman, R.E.: A New Approach to Linear Filtering and Prediction Problems. Journal of Basic Engineering 82(1), 35--45 (1960)

\bibitem{qwen3vl}
Bai, S., et al.: Qwen3-VL Technical Report. arXiv preprint arXiv:2511.21631 (2025)

\end{thebibliography}
\end{document}